\documentclass[letterpaper]{article} 
\usepackage{aaai24} 
\usepackage{times}  
\usepackage{helvet} 
\usepackage{courier}  
\usepackage[hyphens]{url}  
\usepackage{graphicx} 
\urlstyle{rm}
  
\usepackage{natbib}  
\usepackage{caption} 
\usepackage{subfigure} 
\usepackage{booktabs} 
\usepackage{amsmath} 
\usepackage{amsthm} 
\newtheorem{theorem}{Theorem} %
\usepackage{cuted} 
\usepackage{todonotes}
\frenchspacing 
\setlength{\pdfpagewidth}{8.5in}
\setlength{\pdfpageheight}{11in}

\usepackage{algorithm}
\usepackage{algorithmic}

\usepackage{newfloat}
\usepackage{listings}
\DeclareCaptionStyle{ruled}{labelfont=normalfont,labelsep=colon,strut=off} 
\lstset{%
	basicstyle={\footnotesize\ttfamily},
	numbers=left,numberstyle=\footnotesize,xleftmargin=2em,
	aboveskip=0pt,belowskip=0pt,%
	showstringspaces=false,tabsize=2,breaklines=true}
\floatstyle{ruled}
\newfloat{listing}{tb}{lst}{}
\floatname{listing}{Listing}

\pdfinfo{
/TemplateVersion (2024.1)
}

\setcounter{secnumdepth}{0}  

\title{Towards Fairness in Online Service with $k$ Servers\\ and its Application on Fair Food Delivery}
\author {
    Daman Deep Singh,
    Amit Kumar,
    Abhijnan Chakraborty
}
\affiliations {
    Department of Computer Science and Engineering\\
    Indian Institute of Technology Delhi, India
}


\newcommand{\cI}{{\mathcal I}}


\newcommand{\mpart}{\textsc{m-Partition}}
\newcommand{\kfood}{\textsc{$k$-food}}
\newcommand{\fairkfood}{\textsc{Fair $k$-food}}
\newcommand{\pageTW}{\textsc{PageTW}}
\newcommand{\kserver}{\textsc{$k$-server}}
\newcommand{\kserverTW}{\textsc{$k$-serverTW}}

\newcommand{\ktaxi}{\textsc{$k$-taxi}}

\newcommand{\fairKtaxi}{\textsc{Fair $k$-taxi}}

\newcommand{\workfood}{\textsc{Work4Food}}
\newcommand{\flowlp}{\textsc{FlowMILP}} 
\newcommand{\flpsf}{\textsc{FlowMILP(2S)}}

\newcommand{\random}{\textsc{Random}}
\newcommand{\roundRobin}{\textsc{RoundRobin}}
\newcommand{\greedyMin}{\textsc{GreedyMin}}
\newcommand{\minDelta}{\textsc{MinDelta}}
\newcommand{\docfood}{\textsc{Doc4Food}}
\newcommand{\dc}{\textsc{Double Coverage}}

\newcommand{\foodmatch}{\textsc{FoodMatch}}
\newcommand{\fairfoody}{\textsc{FairFoody}}

\newcommand{\sparse}{\textsc{SynSparse}}
\newcommand{\dense}{\textsc{SynDense}}

\newcommand{\dset}{{\mathcal D}}
\newcommand{\mset}{{\mathcal M}}

\newcommand{\rset}{{\mathcal R}}

\newcommand{\gset}{{\mathcal G}}
\newcommand{\xset}{{\mathcal X}}
\newcommand{\bigO}{{\mathcal O}}

\newcounter{note}[section]

\begin{document}

\maketitle

\begin{abstract}
The \kserver\ problem is one of the most prominent problems in 
online algorithms with several variants and extensions. 
However, simplifying assumptions like instantaneous server movements and zero service time has hitherto limited its applicability to real-world problems. In this paper, 
we introduce a realistic generalization of \kserver\ without such assumptions -- the \kfood\ problem, where requests with source-destination locations and an associated pickup time window arrive in an online fashion, and each has to be served by exactly one of the available $k$ servers. The \kfood\ problem offers the versatility to model a variety of real-world use cases such as food delivery, ride sharing, and 
quick commerce. Moreover, motivated by the need for fairness in online platforms, we introduce the \fairkfood\ problem with the \textit{max-min} objective. We establish that both \kfood\ and \fairkfood\ problems are strongly NP-hard and develop an optimal offline algorithm that arises naturally from a time-expanded flow network. Subsequently, we propose an online algorithm \docfood\ involving virtual movements of servers to the nearest request location. Experiments on a real-world food-delivery dataset, alongside synthetic datasets, establish the efficacy of the proposed algorithm against state-of-the-art fair food delivery algorithms.
\end{abstract}

\section{Introduction}

The \kserver\ problem~\cite{kserver} 
is one of the most studied problems in the domain of online algorithms. In this problem, a 
sequence of requests arrives online at various locations in a $m$-point metric space and each request has to be served by one of the $k$ servers by moving the server to the corresponding location, and the objective is to minimize the total movement of the servers. Owing to its significance, a number of variants of this problem have been explored in the past. For instance, the \ktaxi\ problem~\cite{ktaxi} extends the \kserver\ problem to consider each request as a pair of points in the metric space. A server must move from a request's source point to the corresponding destination point in order to fulfill the request. The goal is to efficiently assign the $k$ taxis to minimize the total travel distance. Another notable variant 
is the $k$-server with Time Windows (\kserverTW) problem~\cite{kserverTW} where each request, additionally, has a deadline associated with it 
within which it needs to be served, allowing a server to handle several `live' requests in a single visit. Many of these extensions have the capacity to model specific 
problems like caching, path planning, and resource allocation.

However, all existing variants of the \kserver\ problem assume that the server movement is instantaneous, i.e., once a server is assigned to a particular request, it takes no time for it to move to the request location. Moreover, there is no service time associated with a request, and thus all $k$ servers are available to serve a request at time $t$ even if some of them were assigned a request at time $t-1$. Such simplifying assumptions limit the applicability of \kserver\ problem 
to more realistic scenarios.

To overcome these issues, in this work, we introduce a general problem, called the \kfood\ problem, which builds upon a number of aforementioned $k$-server extensions but is more rooted in reality. In this problem, each request corresponds to a pair of points -- source and destination -- in a metric space accompanied by a pick-up (or preparation) time window. Serving a request involves moving one of the $k$ servers to its source location within the pick-up time window and subsequently moving to its destination. Importantly, the servers take finite amount of time to travel, during which they are unavailable to serve a new request. The objective of \kfood\ is still to minimize the net server movement. 


Going further, 
motivated by the recent reports highlighting the difficult condition of gig delivery drivers in the global south~\cite{fairfoody,work4food, fair-ride-sharing-1, fair-ride-sharing-2, fairassign}, particularly their struggle to earn even minimum wage, 
we also introduce a variant of \kfood\ problem, called the \fairkfood\ problem which assumes a \textit{max-min} objective instead of the \textit{min} cost objective. 
This objective, inspired by Rawls' theory of justice~\cite{rawls1971theory}, aims to maximize the minimum reward earned by any server. We demonstrate the applicability of \fairkfood\ problem in ensuring fairness in food delivery platforms.


Today, platforms like DoorDash, Deliveroo and Zomato have become de facto destinations for ordering food. Apart from serving many customers, they also provide livelihood to millions of delivery drivers worldwide. 
Although multiple approaches have been proposed to ensure fair driver assignment~\cite{fairfoody, work4food}, they all adopt a \textit{semi-online} approach where they collect requests within an accumulation time window and then apply an offline algorithm to match with eligible drivers. The underlying problem, however, is inherently online, where food orders (requests) arrive one by one and have to be assigned to one of the eligible drivers (servers). In this work, apart from developing an offline solution scalable up to thousands of requests and hundreds of servers, we propose the first \textit{purely online} driver assignment algorithm for food delivery, which we call \docfood. Extensive experiments on synthetic and real food-delivery data establish the superiority 
of \docfood\ compared to the semi-online solutions concerning the fairness objective. 

\vspace{1mm} \noindent
\textbf{Our Contributions.} In summary, in this paper, we
\begin{itemize}
    \item introduce the \kfood\ and \fairkfood\ problems with the potential to model multiple real-world applications, and show that both problems are strongly NP-hard; 
    \item design a fractional offline-optimal algorithm for the \fairkfood\ problem 
    utilizing the corresponding time-expanded flow network; 
    \item propose an online algorithm \docfood\ for fair food delivery, employing a prominent heuristic in online algorithms informed by domain-specific knowledge; and 
    \item present extensive experimental analysis on a real-world food delivery dataset and two synthetic datasets.
\end{itemize}
\section{Related Works}
\textbf{\kserver\ and its variants.} The online \kserver\ problem~\cite{kserver} is arguably the most prominent problem in online algorithms. 
Over the past few decades, numerous variations of this problem have been explored, including paging~\cite{paging}, $k$-sever with time windows~\cite{kserverTW}, delayed $k$-server~\cite{delayed-kserver}, $k$-server with rejection~\cite{kserver-penalty}, online $k$-taxi~\cite{ktaxi}, Stochastic $k$-server~\cite{stochasticKserver} and, $k$-server with preferences~\cite{kserver-pref}. 
However, in all these variants, server movement is always instantaneous. Our proposed \kfood\ problem moves beyond this assumption and captures the subtleties of real-world settings.

\vspace{1mm} \noindent
\textbf{Fairness in \kserver.} The \kserver\ problem and its variants have traditionally been studied with the objective of minimizing the total movement cost. To 
our knowledge, the only other work that presents a fairness-motivated objective is~\cite{minmaxPaging}, where the paging problem -- a special case of the \kserver\ problem -- is studied with the {\it min-max} objective. They design a deterministic $\bigO (k \log(n)\log(k))$-competitive algorithm and an $O(\log^2 n \log k)$-competitive randomized algorithm for the online min-max paging problem. They also showed that any deterministic algorithm for this problem has a competitive ratio $\Omega(k \log n/ \log k)$ and any randomized algorithm has a competitive ratio $\Omega(\log n)$. 

\vspace{1mm} \noindent
\textbf{Fairness in online platforms.} The growing prevalence of online platforms in various domains, including ride hailing, food delivery and e-commerce, has attracted increasing attention to these research areas~\cite{gupta2023towards,chakraborty2017fair,foodmatch,fair-ride-sharing-1}. 
For instance, research on ride hailing has focused on efficiency maximization~\cite{ride1, ride2} and more recently on promoting fairness~\cite{fair-ride-sharing-1, fair-ride-sharing-2}. Online vehicle routing problem~\cite{olvrp} is an interesting work that bears resemblance to our \kfood\ problem. They developed an efficient offline mixed-integer optimization framework that scales well to real-world 
workload via sparsification and re-optimization of the offline optimal. 

Similarly, research on food delivery has seen similar shift from increasing efficiency by minimizing travel costs~\cite{foodmatch, food1, food3} 
to developing equitable food delivery algorithms~\cite{fairfoody,work4food,fairassign}. 
Yet, limited exploration exists regarding purely online solutions tailored to fair food delivery. We attempt to fill this gap in the current work.

\if 0
\begin{figure*}[t]
    \centering
    \subfigure[]{\includegraphics[width=0.32\textwidth]{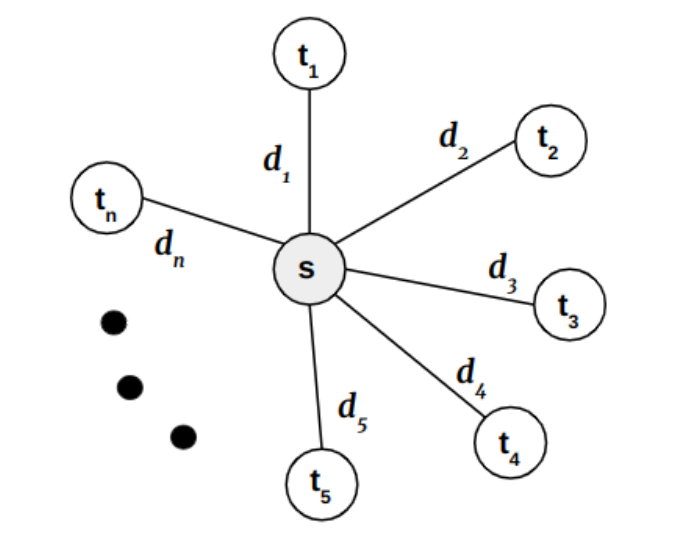} 
    \label{fig:syn}}
    \qquad
    \subfigure[]{\includegraphics[width=0.65\textwidth]{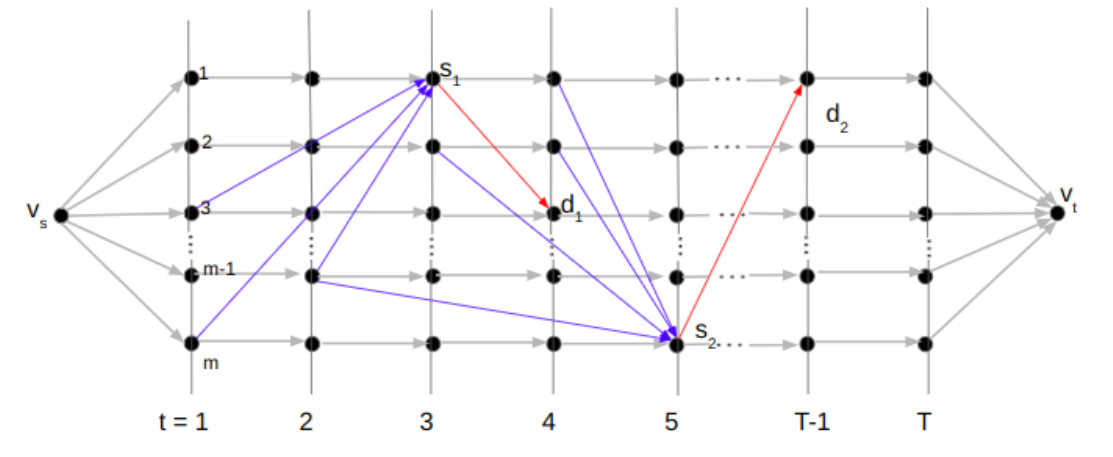} 
    \label{fig:flow-net}}
    \vspace*{-3mm}
    \caption{(a) Star metric with $n+1$ nodes, and (b) Time-Expanded (or Time-Indexed) Flow Network.}
    \vspace*{-4mm}
\end{figure*}
\fi

\begin{figure*}[t]
    \centering
    \subfigure[]{\includegraphics[height=0.24\textwidth,width=0.30\textwidth]{figures/star_metric.pdf} 
    \label{fig:syn}}
    \qquad
    \subfigure[]{\includegraphics[height=0.25\textwidth,width=0.60\textwidth]{figures/flow-net.pdf} 
    \label{fig:flow-net}}
    \vspace*{-3mm}
    \caption{(a) Star metric with $n+1$ nodes, and (b) Time-Expanded (or Time-Indexed) Flow Network.}
    \vspace*{-4mm}
\end{figure*}

\section{Problem Statement}
\label{sec:model}
Next, we formally describe the classical \kserver\ problem and one of its extensions -- \kserverTW\ problem, and then 
introduce the \kfood\ and \fairkfood\ problems.

\vspace{1mm} \noindent
\textbf{Definition 1.} (The \kserver\ problem)
Consider an $m$-point metric space $(\mset, d)$, an online sequence of requests $\sigma = \{r_1, r_2, \cdots, r_n\}$, and a set of $k$ servers existing at specific, not necessarily distinct, points in the metric space. Each request $r_i$ arrives at a specific point in the metric space and must be served by one of the $k$ servers by moving the server to the corresponding location. The movement of a server incurs a cost equivalent to the distance between the current server location and the requested location. The objective is to minimize the total movement cost. The server movement between any two points 
is assumed to be instantaneous. 
Hence, whenever a new request $r_i$ arrives, all $k$ servers are immediately available for assignment.

\vspace{1mm} \noindent
\textbf{Definition 2.} (The \kserverTW\ problem)
The $k$-server with Time Windows (\kserverTW) problem extends the \kserver\ problem to accommodate an additional \textit{deadline} associated with a request. Specifically, each request $r$ arriving at a point in the metric space at some time $t_{r}^{b}$ with a deadline $t_{r}^{e} \geq t_{r}^{b}$ must be served by moving one of the $k$ servers to the corresponding location within the time window $[t_{r}^{b},t_{r}^{e}]$. The non-triviality (beyond \kserver) of this problem lies in the fact that several requests at the same location can be served by a single server visit 
to this location.

\vspace{1mm} \noindent
\textbf{Problem 1.} (The \kfood\ problem) 
Consider a metric space $(\mset,d)$ comprising $m$ points. 
$k$-servers exist at specific points of $\mset$, constituting the initial server configuration. Requests from a predefined sequence $\sigma = \{r_1, r_2, \cdots, r_n\}$ arrive one-by-one at specific locations in $\mset$. 
Each request $r_{j}$ is a $4$-tuple $(s_{j},d_{j},t_{j}^{b}, t_{j}^{e})$. Here $s_{j}, t_{j}$ are points in $\mset$ and $[t_{j}^{b}, t_{j}^{e}]$ is the time-window associated with the request, also known as the \textit{pick-up} (or \textit{preparation}) time window. Specifically, request $r_{j}$ arrives at its source $s_{j}$ at time $t_{j}^{b}$ and is considered served if one of the $k$-servers can reach $s_{j}$ before the deadline $t_{j}^{e}$ and subsequently move to the destination $d_{j}$. Note that the source-destination travel for each request is fixed and not subject to any deadline.


Unlike traditional \kserver\ setting, the \kfood\ problem considers travel time for server movements. When serving a request, the corresponding server becomes temporarily unavailable for other requests and receives a reward equal to the distance it moves. The primary objective of the \kfood\ problem is to minimize the net reward earned by all servers.

\vspace{1mm} \noindent
\textbf{Problem 2.} (The \fairkfood\ problem)
A variant of the \kfood\ problem with a \textit{maxmin} objective: maximizing the minimum reward earned by any server. This fairness objective has its root in John Rawls's theory of justice~\cite{rawls1971theory} which advocates for ensuring that worst-off people do as well as possible. 

Note that for both \kfood\ problem and \fairkfood\ problem, all server movements are on shortest paths. 



\section{Hardness Results}
Next, we prove that both \kfood\ and \fairkfood\ problems are strongly NP-hard by reductions from the \pageTW\ problem and the \mpart\ problem respectively, which 
are known to be strongly NP-hard.

\vspace{1mm} \noindent
\textbf{Definition 3.} (The \pageTW\ problem) Given a computing process working on $n$ pages of data, with access to two memory levels: a fast cache capable of holding $k < n$ pages, and a slower memory (e.g., disk) containing all $n$ pages. Initially, all pages reside in the slower memory. Each page $p$ carries a weight $w_p$. When the process accesses (or requests) a page, it's either fetched from the cache or prompts a page fault, requiring it to enter the cache and potentially evicting an existing page -- termed as \textit{serving} a page request. Each page request comes with a deadline by which it must be served. The objective is to minimize the total weight of evicted pages while satisfying the specified deadlines.

\begin{theorem}
    The \kfood\ problem is NP-hard.
\end{theorem}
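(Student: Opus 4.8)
The plan is to reduce \pageTW\ to \kfood\ in polynomial time without blowing up any numeric parameter, so that the known strong NP-hardness of \pageTW\ transfers to \kfood. The reduction is the \kfood\ analogue of the classical encoding of weighted caching as a \kserver\ instance on a weighted star. Starting from a \pageTW\ instance with pages $1,\dots,n$, weights $w_1,\dots,w_n$, cache size $k$, and a deadlined request sequence, I first augment it with a short \emph{flush} gadget: $k$ brand-new pages $z_1,\dots,z_k$ of weight $1$ each, requested in the round-robin pattern $z_1,z_2,\dots,z_k,z_1,z_2,\dots$ with tight deadlines, appended at the end. Every feasible schedule for the augmented instance $I'$ must then terminate with exactly $\{z_1,\dots,z_k\}$ resident, and an optimal completion of the flush evicts precisely the $k$ pages that were resident just before it, at a cost equal to their total weight. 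I then build the \kfood\ instance: a metric with a center $c$ and, for each page $p$ of $I'$, a leaf $\ell_p$ at distance $w_p/2$ from $c$ (so $d(\ell_p,\ell_q)=\tfrac12(w_p+w_q)$, with all shortest paths through $c$); $k$ servers initially at $c$; and each page request for $p$ at time $\tau$ with deadline $\delta$ mapped to the \kfood\ request $(\ell_p,\ell_p,\tau',\delta')$ with coinciding source and destination, where $\tau',\delta'$ come from a uniform time dilation in which every window is widened to at least the star's diameter $\max_p w_p$ so that the needed server travel always fits. The working dictionary is ``a server parked at $\ell_p$'' $\leftrightarrow$ ``page $p$ resident'': such a server serves every page-$p$ request at zero marginal movement (a hit), whereas the only way to serve a request for a non-resident page is to move some server from $\ell_q$ through $c$ to $\ell_p$ at cost $\tfrac12 w_q+\tfrac12 w_p$, i.e.\ to evict $q$ and fetch $p$.

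The technical core is a canonical-form lemma: every \kfood\ solution on this star can be rewritten, without increasing movement cost, into a \emph{lazy} one in which a server moves only when it is the server assigned to the request being served and always moves straight through $c$, never stopping at a non-leaf point --- the usual exchange argument for \kserver-type problems. (It is here that the dilation is used, to guarantee that each window is wide enough for the travel and that the at most $k$ server relocations triggered by one request, being performed by distinct servers, all complete in time; consequently a \pageTW-feasible schedule for $I'$ yields a \kfood-feasible solution and conversely.) Lazy solutions are in bijection with cache schedules for $I'$, and summing move costs over a schedule $S'$ gives total \kfood\ movement $=\tfrac12\big(\text{eviction weight of }S'\big)+\tfrac12\big(\text{fetch weight of }S'\big)$; since the cache starts empty and --- thanks to the flush --- ends at the fixed set $\{z_1,\dots,z_k\}$ of total weight $k$, the fetch weight equals the eviction weight plus $k$. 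Hence the \kfood\ optimum equals the \pageTW\ optimum of $I'$ plus the fixed constant $k/2$, so $I'$ admits a valid schedule of eviction weight at most $B$ iff the \kfood\ instance admits a solution of cost at most $B+k/2$; as the construction is polynomial in every respect, \kfood\ is strongly NP-hard.

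The two places I expect to need care, as opposed to the core encoding, are: (i) pinning down the additive offset, which is exactly why the flush gadget is present --- without forcing the terminal configuration the extra ``$\tfrac12\cdot$(weight of the final cache)'' in the \kfood\ cost is schedule-dependent, and the decision versions would fail to match; and (ii) reconciling \kfood's non-instantaneous, resource-consuming server movements with \pageTW's instantaneous fetches, which the time dilation handles but whose feasibility-preservation in both directions must be verified. The rest is routine bookkeeping.
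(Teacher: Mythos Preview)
Your reduction is sound in outline but takes a substantially heavier route than the paper's, and it has one gap that matters. The paper reduces from \emph{unit-weight} \pageTW, places the servers on the \emph{uniform} metric on $n$ points, and gives the $k$ servers \emph{infinite speed}. With those three choices everything you build collapses: the metric is uniform, so no weighted star is needed; every eviction costs $1$ and every server hop costs $1$, so eviction cost and movement cost coincide on the nose and no flush gadget or additive bookkeeping is required; and infinite speed makes travel time zero, so your time-dilation step, the feasibility check in both directions, and the canonical-form lemma all disappear. The correspondence ``servers sit at $i_1,\dots,i_k$ $\Leftrightarrow$ pages $i_1,\dots,i_k$ resident'' then gives cost equality directly. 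What your route does buy is that, by using dilation rather than infinite speed, you prove hardness for instances that actually exhibit the non-instantaneous movement distinguishing \kfood\ from \kserverTW, whereas the paper's instance is degenerate in that respect.

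The gap is in the flush. With general weights the gadget does \emph{not} pin the offset between $\mathrm{opt}(I)$ and $\mathrm{opt}(I')$: you correctly get that the \kfood\ optimum equals $\mathrm{opt}(I')+k/2$, but $\mathrm{opt}(I')=\min_{S}\bigl(\text{evict}(S)+\text{weight of the cache $S$ leaves resident}\bigr)$, and the second term is schedule-dependent --- a schedule suboptimal for $I$ that happens to leave light pages resident can beat the $I$-optimal one once the flush evictions are charged. So you have not reduced from \pageTW\ on $I$; you have reduced from \pageTW\ restricted to flush-terminated instances $I'$, whose hardness you have not argued. The fix is exactly what the paper does: restrict to unit weights (already strongly NP-hard), so the pre-flush cache always weighs $k$ and the offset becomes the constant $k$; this also lets you replace the weighted star by the uniform metric and dispense with the flush altogether.
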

\begin{proof}
    The {\em unit-weight} \pageTW\ problem, where all pages $p$ have $w_p = 1$, was shown to be NP-hard by~\cite{pagetw}.  We give a polynomial time reduction from the unit weight \pageTW\ problem to the \kfood\ problem. Consider such an instance $\cI$ of the \pageTW\ problem specified by a set of $n$ pages, cache size $k$, and a request sequence of length $T$. The request sequence at time $t$ is specified by a page $p_t$ and deadline $D_t$. We map the instance $\cI$ to a \kfood\ instance $\cI'$ as follows. The metric space in $\cI'$ is given by uniform metric on $m$ points labeled $\{1, \ldots, m\}$ and there are $k$ servers, each with infinite speed. The request sequence has length $T$, where the request $r_t$ at time $t$ is given by the 4-tuple $(p_t, p_t, t, D_t)$, i.e., a server needs to visit point $p_t$ during $[t,D_t]$ (we assume w.l.o.g. that the pages are also numbered from $1$ to $n$). We show that the instance $\cI$ has a solution with eviction cost $C$ iff the instance $\cI'$ has a solution of total movement cost $C$. Suppose $\cI$ has a solution $\cal S$ of cost $C$. We produce a solution ${\cal S}'$ for $\cI'$ as follows. At every time $t$, if the cache in the solution $\cal S$ contains pages $i_1, \ldots, i_k$, then the  $k$ servers in the solution ${\cal S}'$ are present at $i_1, \ldots, i_k$ at time $t$. Suppose at a time $t$, a page $p$ gets evicted and a page $p'$ is brought in the cache in the solution $\cal S$. Then at this time, the server from location $p$ moves to location $p'$ in ${\cal S}'$. Thus, both solutions incur the same cost. Similarly, we can show that if there is a solution of cost $C$ to $\cI'$, then there is a solution of cost $C$ to $\cI$.


\end{proof}

\noindent
\textbf{Definition 4.} ~(Multi-way number partitioning: \mpart). Given a multi-set of $n$ positive integers $\dset = \{d_1, d_2, \cdots, d_n\}$ with sum $\Sigma_{d_j \in \dset} d_j = kT$ and a target sum $T$, does there exist a partition $\{D_1, D_2, \cdots, D_k\}$ of $\dset$ such that for each partition $D_h$, $\Sigma_{d_j \in D_h} d_j = T$?

The multi-way number partitioning problem is a well-known strongly NP-hard problem~\cite{npc-results}. Leveraging this result, we'll show that the \fairkfood\ is strongly NP-hard as well.

\begin{theorem}
\fairkfood\ problem is strongly NP-hard.
\end{theorem}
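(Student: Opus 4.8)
The plan is to reduce from \mpart, in the same spirit as the \kfood\ hardness proof (Theorem~1). Given a \mpart\ instance $\dset=\{d_1,\dots,d_n\}$ with $\sum_j d_j = kT$ and target $T$, I would construct a \fairkfood\ instance as follows. The metric space is the star on $n+1$ points -- a center $c$ and leaves $\ell_1,\dots,\ell_n$ -- with $d(c,\ell_j)=d_j$; this is just the shortest-path metric of that star, hence a genuine metric, in particular $d(\ell_i,\ell_j)=d_i+d_j$. All $k$ servers start at $c$ and have infinite speed. For each $j$ there is a single request $r_j$ with source $\ell_j$, destination $c$, release time $0$, and a generous deadline (say $t_j^e=2kT$). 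The decision version of \fairkfood\ I would target is: is there a feasible solution in which every server earns reward at least $2T$?

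The key structural fact to establish is that a server's total reward depends only on which requests it serves, and equals exactly twice the sum of the corresponding $d_j$'s. Indeed, a server begins at $c$ and every request has destination $c$, so it is always at $c$ when it starts a request and returns to $c$ upon completing it; since all motion is on shortest paths, serving request $j$ costs the round trip $c\to\ell_j\to c$ of length $2d_j$, and because $d(\ell_i,\ell_j)=d_i+d_j$ in a star, interleaving the outward/return legs of several requests never saves distance either. Hence a server assigned the request set $S$ earns precisely $2\sum_{j\in S}d_j$. Infinite speed together with the generous deadlines makes every request servable no matter how requests are distributed, so a feasible solution is exactly an assignment of $\{r_1,\dots,r_n\}$ to the $k$ servers, and all $n$ requests must be served.

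The equivalence then follows. The sum of server rewards is $2\sum_j d_j=2kT$ for every assignment, so the minimum server reward is at most the average $2T$, with equality iff each server earns exactly $2T$, i.e.\ iff the \mpart\ elements indexed by each server's request set sum to $T$. Thus a solution with minimum reward $\ge 2T$ exists iff $\dset$ admits a partition into $k$ parts each of sum $T$, i.e.\ iff the \mpart\ instance is a yes-instance. The reduction runs in polynomial time, and every numeric parameter of the constructed instance -- the $n+1$ points, the distances $d_j$, the release times and deadlines, and the threshold $2T=2\sum_j d_j/k\cdot k$ -- is polynomially bounded in the size of the \mpart\ instance; since \mpart\ is strongly NP-hard, so is \fairkfood.

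The step requiring the most care, and the main obstacle, is the structural claim that the reward of a server is deterministically $2\sum_{j\in S}d_j$: one must rule out that clever interleaving of several requests or non-shortest-path movement changes this quantity, and confirm that servers never gain reward by idling-then-detouring (which the "movements on shortest paths while serving" convention and the star geometry rule out). One must also check that strong NP-hardness genuinely transfers, i.e.\ that the reduction does not inflate the magnitudes of the numbers -- which holds since the distances are exactly the input integers. The star metric is chosen precisely to make these points transparent, and the fact that leaving a request unserved only shrinks the total reward pool makes the argument robust to whether serving every request is treated as formally mandatory.
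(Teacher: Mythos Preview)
Your proof is correct and follows essentially the same route as the paper: a reduction from \mpart\ via a star metric in which serving request $j$ contributes exactly $2d_j$ to a server's reward, so the minimum reward reaches $2T$ iff the partition exists. Your choice to place the source at the leaf and the destination at the center (the paper does the reverse and adds a return-to-center stipulation via a footnote) is a small but clean variation that makes the $2d_j$ accounting automatic from the problem definition, and you are somewhat more explicit than the paper about why the numerical magnitudes stay polynomially bounded so that \emph{strong} NP-hardness transfers.
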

\begin{proof}
    We start by defining a special case of the \fairkfood\ problem with $t_{b}^{r}=t_{e}^{r}$ for all requests $r$ and negligible service time. We call this problem the \fairKtaxi\ problem as it resembles the \ktaxi\ problem~\cite{ktaxi} except the objective. We first show that the \fairKtaxi\ problem is strongly NP-hard using a reduction from the \mpart\ problem. Subsequently, we extend this result to the \fairkfood\ problem.
    Hence, consider an instance $\cI$ of the \mpart \ problem as described in~Definition 3. We construct an instance $\cI'$ of the \fairKtaxi\ problem as follows.


    Let us consider a star metric space containing $n+1$ points, with the central node designated as $s$, and the remaining nodes represented as ${t_1, t_2, \cdots, t_n}$ (refer Figure~\ref{fig:syn}). Each node $t_j$ is at a distance $d_j$ from $s$. Initially, all $k$ servers are placed at the node $s$. Now, consider a sequence of $n$ requests denoted by $\sigma = \{r_1, r_2, \cdots, r_n\}$, where each request $r_j$ arrives at source node $s$ and has its destination at $d_j$. 
    In order to serve a request $r_j$, one of the servers must move from $s$ to $t_j$ and then return to $s$ \footnote{The hardness of this restricted problem with the return requirement naturally extends directly to the more general case without it.}.
    By doing so, the server earns a reward equal to its total movement i.e., $2d_j$. Consequently, the set of rewards, say $\dset'$, associated with $\sigma$ becomes $\{2d_1, 2d_2, \cdots, 2d_n\}$. We now claim that the instance $\cI$ has a solution iff instance $\cI'$ has a solution where the minimum reward earned by any server is $2T$.
    
   Indeed, suppose $\cI$ has a solution where the partition of the set $\dset$ is $D_1, \ldots, D_k$. Then, in the instance $\cI'$, we define a solution where server $h$ serves the requests corresponding to $D_h$. It follows that each of the servers collects a reward of $2T$. Conversely, suppose there is a solution to $\cI'$ collects a reward of at least $2T$. Since the total of all rewards is equal to $2Tk$, it follows that each server receives reward of exactly $2T$. Now, construct a solution $D_1, \ldots, D_k$ to the instance $\cI$ where $D_h$ is the set subset of $\dset$ corresponding to the requests served by server $h$. It follows that the total sum of the values in $D_h$ is equal to half the total reward of server $h$, which is equal to $T$. Hence, we have shown that the problem $\fairKtaxi$ \ is NP-hard. 
    Given that the \fairkfood\ problem is a generalization of \fairKtaxi, it follows that the \fairkfood\ problem is also strongly NP-hard. 
    
\end{proof}


\section{Methodology}
\textbf{$\flowlp$: Fractional offline optimal for \fairkfood.} 
\label{subsec:offline-optimal}
We propose a fractional offline solution (\flowlp) for the \fairkfood\ problem. It is a 
Mixed Integer LP (MILP) that, intuitively, follows from the min-cost LP formulation for \kserver. This 
MILP may route fractions of servers. Consequently, serving a request entails moving a unit amount of server towards the request's location. 

Consider an instance of the \fairkfood\ problem with a metric space $(\xset, d)$ on $m$ points and a sequence of $n$ requests $\sigma = \{r_1, r_2, \cdots, r_n\}$. We observe the entire duration $T$, from the arrival of the first order to the end of service of the last order, in timesteps of size $\eta$ so chosen that each request has a distinct~\footnote{This is without loss of generality. We can arrange simultaneous requests arbitrarily and treat them as distinct arrivals.} arrival timestep. 
We now construct a time-expanded graph $\gset (V,E)$ where $V$ is the set of nodes obtained by copying the nodes in $\xset$ at each timestep i.e., $V = \{v_{i,j} | i \in [m], j \in \{0, \eta, 2\eta, \cdots, T\}\} \cup (v_s,v_t)$ where $v_{i,j}$ is the copy of $v_i\in \xset$ at timestep $t$ and $E$ is the set of edges. Here $v_s$ and $v_t$ are special source and sink nodes. ) 
We now construct a time-expanded graph $\gset (V,E)$ where $V$ is the set of nodes obtained by copying the nodes in $\xset$ at each timestep i.e., $V = \{v_{i,j} | i \in [m], j \in \{0, \eta, 2\eta, \cdots, T\}\} \cup (v_s,v_t)$ where $v_{i,j}$ is the copy of $v_i\in \xset$ at timestep $t$ and $E$ is the set of edges. Here $v_s$ and $v_t$ are special source and sink nodes. 

There exist three types of edges in $E$: $(i)$ \textit{Source-Sink edges:} $0$-cost edges connecting source node $v_s$ to the nodes in $\{v_{i,0} | i \in [m]\}$ and terminal node $v_t$ to $\{v_{i,T} | i \in [m]\}$. $(ii)$ \textit{Self edges:} $0$-cost edges between the nodes $v_{i,j}$ and $v_{i,j+1}, \  \forall i \in [m],  j\in\{0,\ldots,(T-1) \}$, indicating that a server may stay at a location.  $(iii)$ \textit{Cross edges:} For every request $r_j=(s_j, d_j, t_{j}^{b}, t_{j}^{e})$, we add the following edges: let $s_{j,t}$ and $d_{j,t}$ denote  the copy of $s_j$ and $d_j$ in $V$ at timestep $t$ respectively. First, we add edges from $v_{h,t'}\in V$ to $s_{j,t_{j}^{e}}$; $t_j^b \leq t' < t_j^e$, $v_h \ne s_j$ if the time taken to traverse on the shortest path from $v_{h,t'}$ to  $s_{j,t_{j}^{b}}$ is at most  $(t_{j}^{e}-t_{j}^{b})$. The cost of this edge is equal to the corresponding shortest path distance. Next, we add an edge between $s_{j,t_{j}^{e}}$ and $d_{j,t}$, where $(t-t_j^e)$ is the time taken to travel from $s_j$ to $d_j$ on the shortest path and the cost of this edge is equal to the length of this path. Note that we can prune some edges here. If we have edges from $v_{h,t_1}$ and $v_{h,t_2}$ to $s_{j,t_j^e}$ for some vertex $v$ and times $t_1 < t_2$, then we can remove the first edge. Indeed, there is no loss in generality in assuming that the server arrives at the source location at time $t_j^e$. This ensures that the server is moved only when the request becomes critical. 
The flow on each edge $e$, denoted as $f_{e}$, comprises of flow from each of the $k$ servers i.e., $f_{e} = \Sigma_{i=1}^{k} f_{e}^{i}$. 
Figure~\ref{fig:flow-net} shows an example flow network with $2$ requests $r_1=(s_1,d_1,1,3)$ and $r_2=(s_2,d_2,2,5)$. 

The \flowlp\ on instance $(\gset(V,E), \sigma)$ is defined as
\begin{align}
    \textbf{max.}~&\mset - p.\sum\limits_{r=1}^{n} z_r \label{eqn:obj}\\
    \textbf{s.t.}~& m_i = \sum\limits_{e \in E} f_{e}^{i}.c_{e}, \forall i \in [k] \label{eqn:2}\\
    & \mset \leq m_i, \forall i \in [k] \label{eqn:3}\\ 
    & \sum\limits_{u\in\delta^{-}(v)}\sum\limits_{i=1}^{k} f^{i}_{(u,v)} \leq 1, \forall v \in \{s_j: j \in [n]\} \label{eqn:4}\\ 
    & z_{r_j} + \sum\limits_{i=1}^{k} f_{(s_j,d_j)}^{i} = 1, \forall r_j \in \sigma \label{eqn:5}\\ 
    & \sum\limits_{v \in \delta^{+}(u)} f^{i}_{(u,v)}=\sum\limits_{v \in \delta^{-}(u)} f^{i}_{(u,v)}, u \in V, i \in [k] \label{eqn:6}\\ 
    & \sum\limits_{i=1}^{m}\sum\limits_{j=1}^{k} f^{j}_{(s,v_{i1})} = \sum\limits_{i=1}^m\sum\limits_{j=1}^{k} f^{j}_{(v_{iT},t)} = k \label{eqn:7}\\ 
    \textbf{vars.} ~& f_{e}^{i} \in [0,1], ~\quad \forall i \in [k], \quad 
                     z_r \in \{0,1\}, \quad \forall r \in \sigma \notag
\end{align}


The objective (\ref{eqn:obj}) captures the goal of maximizing  the minimum reward $\mset$ while minimizing the infeasibilities ($z_r$'s); $p$ being the infeasibility penalty. The binary variable $z_r$ is $1$ iff  request $r$ cannot be served. The variable $f_e^i$ denotes the flow of server $i$ on edge $e$. The minimum reward $\mset$ is computed using the constraints (\ref{eqn:2}) and (\ref{eqn:3}). Constraints (\ref{eqn:4}) and (\ref{eqn:5}) capture that each request $r$ is served by at most $1$ server, and if it is not served, then $z_r$ is 1.  
Constraints (\ref{eqn:6}) is the flow-conservation constraint, whereas constraint (\ref{eqn:7}) refers to the fact that we have $k$ servers.

Overall, the \flowlp\ formulation has $\bigO(mk(n+T))$ decision variables and $\bigO(nkT)$ constraints. The \flowlp\ formulation is flexible and can be easily modified to accommodate various other objectives. For example, the \kfood\ problem can be modeled using \flowlp\ by changing the objective (\ref{eqn:obj}) to a min-cost objective and disregarding the separate flows for each server.


\begin{table*}[t]
    \vspace{-2mm}
    \parbox{0.45\linewidth}{
        \centering
        \caption{Comparison of \flowlp\ against various online algorithms on the \sparse\ dataset.}
        \label{tab:syn-sparse}
            \begin{tabular}{cccc}
            \toprule
             & \textbf{\#Unserved} & \textbf{Cost} & \textbf{Min.R}\\
            \midrule
            \flowlp & $2$ & $2444.50$ & $2444.50$ \\
            \flowlp\textsc{(2S)} & $2$ & $1399.56$ & $1399.56$ \\
            \midrule
            \random & $8$ & $1712.04$ & $723$ \\
            \greedyMin & $10$ & $1703.43$ & $1065$ \\
            \docfood & $5$ & $1766.01$ & $1186$ \\
            \minDelta & $5$ & $1658.12$ & $430$  \\
            \roundRobin & $30$ & $1500.85$ & $10$\\
            \bottomrule
            \end{tabular}
            }
        \hfill 
        \parbox{0.45\linewidth}{
            \centering
            \caption{Comparison of \flowlp\ against various online algorithms on the \dense\ dataset.}
            \label{tab:syn-dense}
            \begin{tabular}{cccc}
            \toprule
             & \textbf{\#Unserved} & \textbf{Cost} & \textbf{Min.R}\\
            \midrule
            \flowlp & $1$ & $7137.44$ & $7137.44$ \\
            \flowlp\textsc{(2S)} & $1$ & $1282.93$ & $1282.93$ \\
            \midrule
            \random & $8$ & $1534.80$ & $800$ \\
            \greedyMin & $8$ & $1550.81$ & $1034$ \\
            \docfood & $7$ & $1554.02$ & $1045$ \\
            \minDelta & $6$ & $1496.73$ & $855$ \\
            \roundRobin & $12$ & $1494.92$ & $10$ \\
            \bottomrule
            \end{tabular}
        }
\vspace*{-2mm}
\end{table*}

\begin{figure*}[t]
  \vspace*{-4mm}
  \centering
  \subfigure[]{\includegraphics[width=0.32\textwidth]{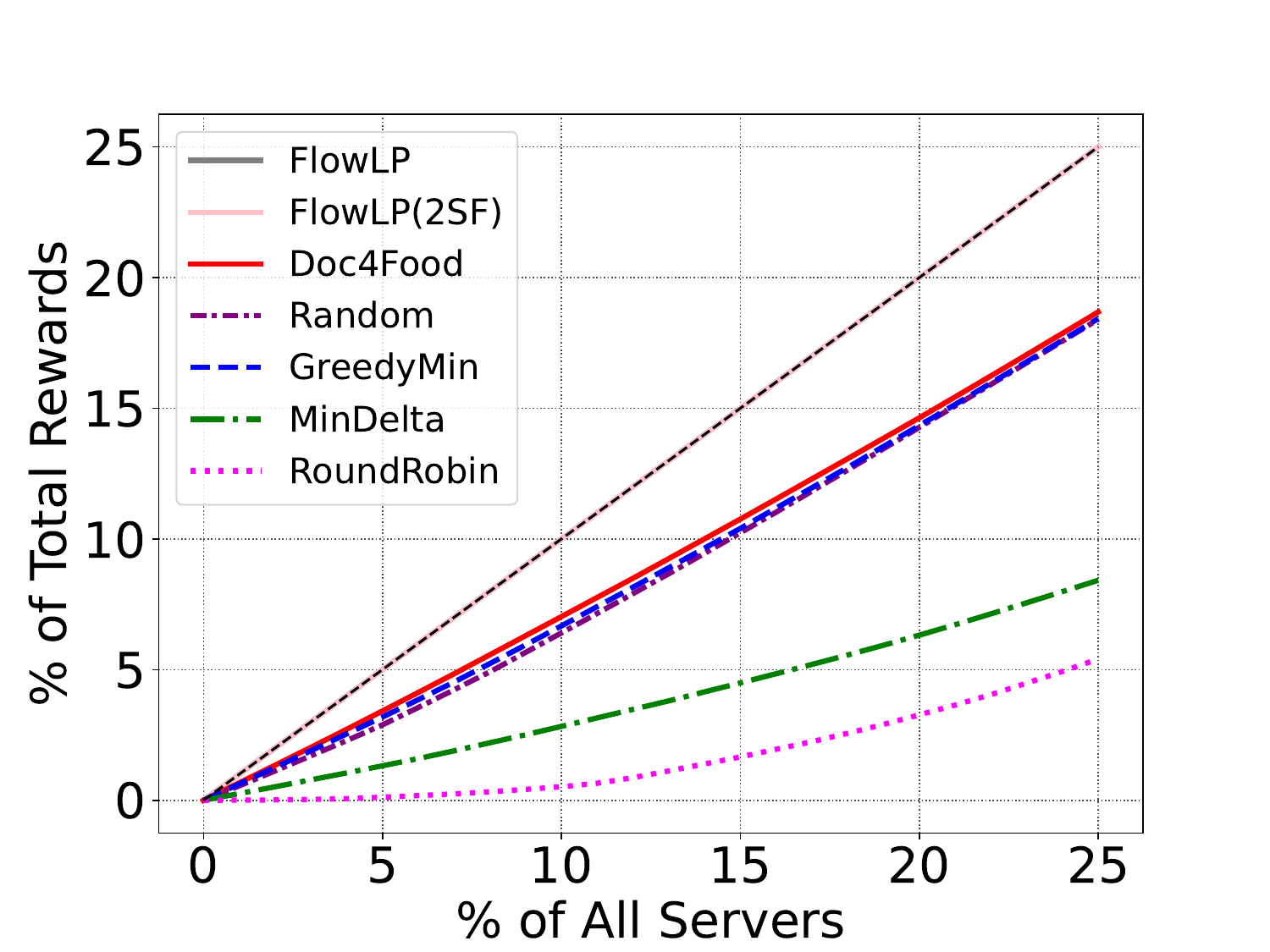} \label{fig:syn-sparse}} 
  \hfill
  \subfigure[]{\includegraphics[width=0.32\textwidth]{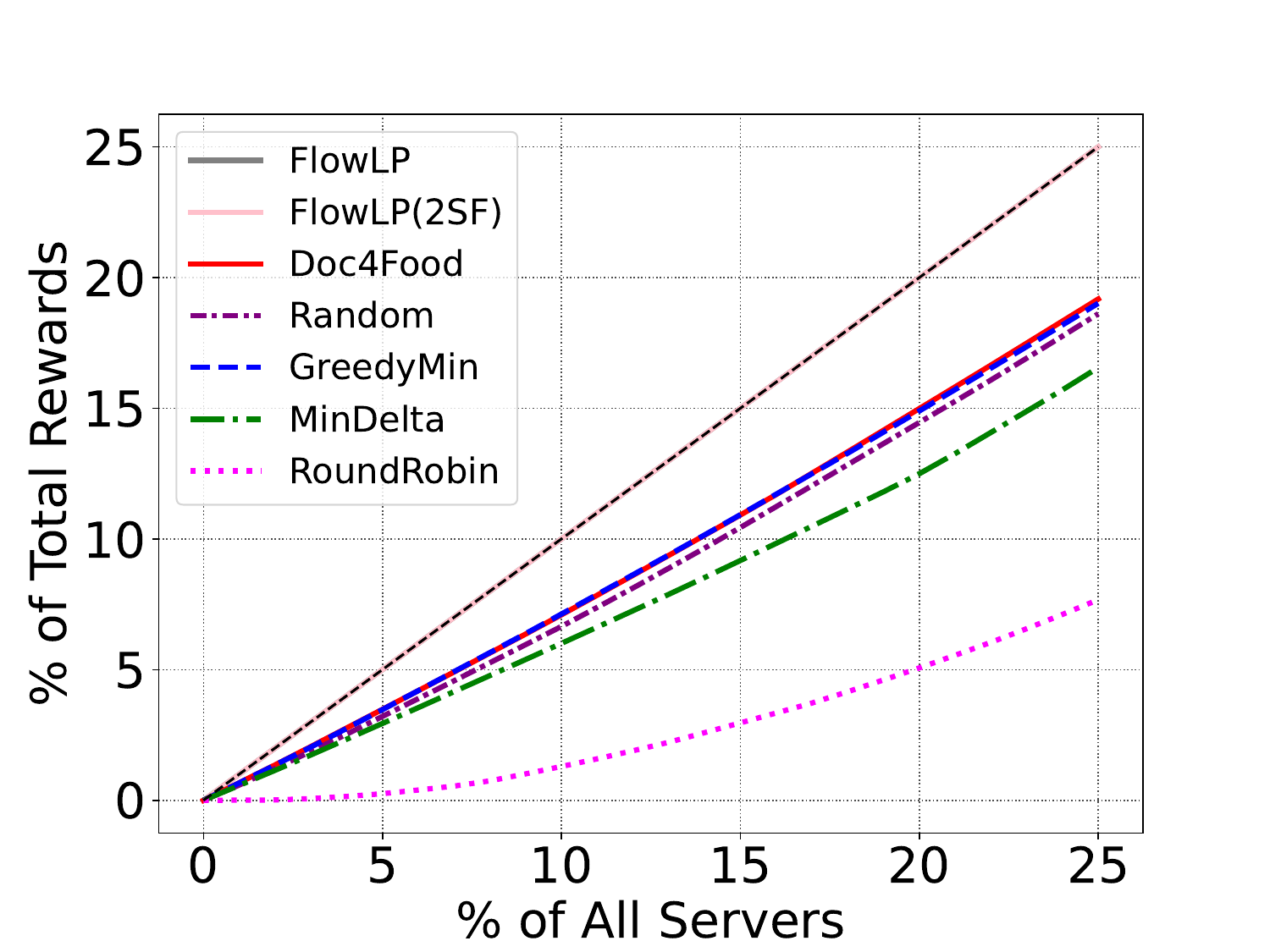} \label{fig:syn-dense}} 
  \hfill
  \subfigure[]{\includegraphics[width=0.32\textwidth]{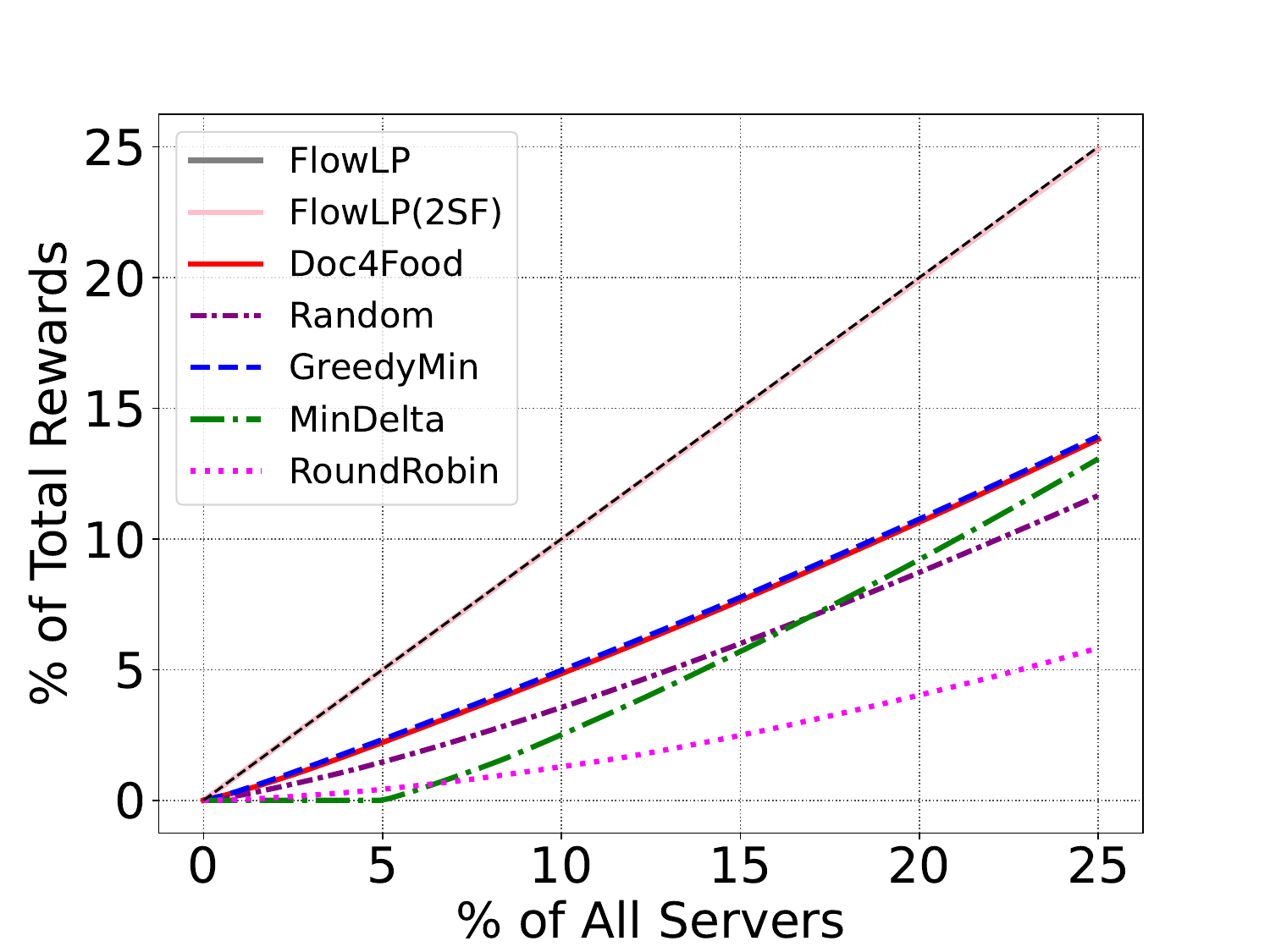} \label{fig:flowlp}} 
  \vspace*{-4mm}
  \caption{Lorenz curves for various algorithms on the \sparse\ (left), \dense\ (middle), and food-delivery datasets.}
\vspace*{-4mm}
\end{figure*}

\vspace{1mm} \noindent
\textbf{Cost Efficiency.} 
Focusing only on maximizing the minimum server reward might result in \flowlp\ deliberately placing  servers at locations away from future requests. 
In real world food delivery setting, such redundant increments in rewards can lead to an unnecessary rise in travel expenses 
i.e., cost to the platform. This scenario clearly implies an inherent cost-fairness trade-off. 
To consider both platform and server perspectives, we introduce an extra constraint in the \flowlp\ that upper bounds the total server rewards by a constant multiple of the cumulative edge costs associated with source-destination edges of each request. We choose these edges because they are invariant to the algorithm's other routing decisions. 
\begin{equation}
    \sum\limits_{i=1}^{k}m_i \leq \alpha.\sum\limits_{i=1}^{k} f_{(s_j, d_j)}^{i}, \quad \forall r_j \in \sigma \label{eqn:8}
\end{equation}
The additional constraint is represented as equation (\ref{eqn:8}), where $\alpha$ is a tunable parameter that controls the cost-fairness trade-off. The lower the value of $\alpha$, the higher the emphasis on reducing the platform-cost and vice-versa. Including this constraint makes \flowlp\ considerate towards both platform and the servers, we name the instance of \flowlp\ with this constraint as Two-Sided \flowlp\ (\flpsf).


\vspace{1mm} \noindent
\textbf{Online Algorithms.} 
\label{subsec:online-algos}
In this work, we consider the following online algorithms pertaining to the \textit{max-min} objective. The guiding principle underlying these approaches is to prioritize an \textit{eligible} server with the minimum accumulated reward while assigning servers to requests. An \textit{eligible} server, with respect to a request $r$, is one that is \textit{available} (not currently serving any other request) to serve the request as well as \textit{reachable} within $r$'s preparation time.
\begin{itemize}
    \item \textbf{\random.} The core idea 
    here is that as a server starts gaining rewards, it becomes exponentially harder for it to get further rewards. Specifically, an upcoming request $r$ at time $t$ is assigned to one of the eligible servers $i$ with a probability proportional to $2^{-x_i}$ where $x_i$ is the accumulated reward of the $i$-th server till time $t$. It is similar to increasing a server's weight as it accumulates more rewards that, in turn, make its movements harder. The time complexity per request is $\bigO(k)$.
    
    \item \textbf{\greedyMin.} In this online algorithm, an upcoming request $r$ is assigned to an eligible server with the minimum reward so far. This can be viewed as a specific instance of the \random\ approach, where the server with the minimum reward is assigned with a unit probability. The time complexity per request is $\bigO(k)$.
    
    \item \textbf{\docfood.} Drawing inspiration from the classical \dc\ algorithm~\cite{double-cover} commonly seen in the context of \kserver\ problems, we propose {\bf DOuble Coverage for FOOd Delivery (\docfood)} algorithm that combines \greedyMin\ with a heuristic informed by the domain knowledge specific to the food-delivery sector.
    
    In food delivery, incoming orders originate from a predetermined set of locations corresponding to various restaurants. This set of locations typically forms a small subset of all vertices in the metric space. With respect to \kfood, this translates into 
    a distinct subset of points, say $\rset$, within the metric space, representing the potential arrival locations for requests.
    
    Transitioning to the \docfood\ algorithm, upon the arrival of a request $r$, akin to the \greedyMin\ strategy, the eligible server with the minimum accumulated reward is chosen to move to $r$'s source location. While it moves towards $r$, all available servers also move virtually, by a small distance, towards their nearest nodes in $\rset$. Such server movements are also referred to as \textit{non-lazy} movements. This 
    mimics the actual practice of delivery drivers' movement to the nearest restaurant location (or market area) when they are idle~\cite{fairassign}. Note that while assigning servers, the virtual locations are considered for determining eligibility, but the actual locations are used for calculating rewards for the selected server. 
\end{itemize}


The consideration of \textit{fractional servers} alongwith the described \textit{edge pruning} allows us to efficiently solve \flowlp\ for many practical instances of the \fairkfood\ problem (refer Experimental Evaluation), leveraging advanced MILP solver~\cite{gurobi,cplex}. However, in contrast to \flowlp, the described online algorithms are rounded by default, 
i.e., they assign an entire server to a single request rather than using fractional assignments.  

\section{Experimental Evaluation}
\label{sec:experiments}
Next, we present a comprehensive experimental analysis on both synthetic and real-world food-delivery datasets.

\subsection{Experimental Framework.}
\label{subsec:eval-framework}
We conduct experiments on a machine with an Intel(R) Xeon(R) CPU @ $2.30$GHz and $252$GB RAM running on Ubuntu $20.04.5$ LTS. The entire codebase is written in Python $3.9$, and the Gurobi optimizer~\cite{gurobi} is used for solving the linear programs. 

\subsection{Baselines.} 
\label{subsec:baselines}
\begin{itemize}
    \item \flowlp: The fractional offline optimal algorithm for the \fairkfood\ problem.
    \item \random\ and \greedyMin: As explained in the subsection \textit{Online Algorithms} under \textit{Methodology}.
    \item \minDelta: A purely online counterpart to the heuristic-based semi-online algorithm developed by~\cite{fairfoody} aiming to minimize the reward gap between the minimum and maximum earning servers. Takes $\bigO(k^2)$ time per request assignment.
    \item \roundRobin: 
    Here, an upcoming request is assigned to the first \textit{eligible} server in round-robin manner. The server assignment complexity per request is $\bigO(k)$. 
\end{itemize}

\subsection{Evaluation Metrics.}
\label{subsec:eval-metrics}
We consider the following evaluation metrics:
\begin{itemize}
    \item \textbf{Number of infeasible requests (\#Unserved).} The number of requests that the corresponding algorithm could not serve. This might happen due to a scarcity of \textit{eligible} (available and reachable) servers for the given request. 
    \item \textbf{Minimum Reward (Min.R).} Given that our objective is to maximize the minimum reward, we record the minimum reward among the server rewards. The higher the minimum reward, the fairer the algorithm. If it is $0$, we look at the number of servers with a $0$ reward. 
    \item \textbf{Cost.} We define the cost of an algorithm as the average of all the server rewards. Since, for a given request, the source-destination distance is fixed, this cost essentially represents the algorithm's routing and server-assignment decisions. In real-world applications such as food delivery, ride-sharing, etc., it represents the total cost incurred by an online platform to compensate its delivery drivers (also referred to as the \textit{platform-cost}). 
\end{itemize}

\if 0
\begin{table}[t]
\centering
\caption{Data description. First row corresponds to the food-delivery dataset. Second row describes the synthetic dataset.}
\label{tab:datasets}
    \begin{tabular}{ccc}
    \toprule
    \textbf{Location} & \textbf{\#requests} & \textbf{\#servers}\\
    \midrule
    $A$ &  $1034$ & $650$ \\
    $syn$ & $250$ & $100$ \\
    \bottomrule
    \end{tabular}
\end{table}
\fi

\subsection{Experiments with Synthetic Data}
\begin{table*}[t]
    \parbox{0.45\linewidth}{
      \centering
      \caption{Comparison of \flowlp\ against various online algorithms on the food-delivery dataset. The values in the parentheses indicate the number of servers with $0$ reward.}\label{tab:flowlp}
      \begin{tabular}{cccc}
        \toprule
         & \textbf{\#Unserved} & \textbf{Cost} & \textbf{Min.R.}\\
        \midrule 
        \flowlp & $7$ & $19049.20$ & $19049.20$ \\
        \flowlp\textsc{(2S)} & $8$ & $8505.19$ & $8505.19$ \\
        \midrule
        \random & $8$ & $7645.31$ & $786$ \\
        \greedyMin & $11$ & $7508.34$ & $2319$ \\
        \docfood & $7$ & $7574.29$ & $2381$ \\
        \minDelta & $0$ & $6714.84$ & $0 (32)$ \\
        \roundRobin & $25$ & $7405.97$ & $0 (1)$ \\
        \bottomrule
        \end{tabular}  
    }
    \hfill
    \parbox{0.45\linewidth}{
    \centering
    \caption{Offline delivery algorithms vs Online algorithms on the food-delivery dataset. The values in the parentheses indicate the number of servers with $0$ reward.} \label{tab:fmff}
    \begin{tabular}{cccc}
    \toprule
     & \textbf{\#Unserved} & \textbf{Cost} & \textbf{Min.R}\\
    \midrule
    \foodmatch & $2$ & $8357.47$ & $0 (379)$ \\
    \fairfoody & $170$ & $9203.52$ & $0 (268)$ \\
    \workfood & $276$ & $8693.66$ & $0 (308)$ \\
    \midrule
    \random & $0$ & $15705.28$ & $0 (269)$ \\
    \greedyMin & $35$ & $8481.59$ & $0 (223)$ \\
    \docfood & $30$ & $8672.00$ & $0 (223)$ \\
    \minDelta & $0$ & $11589.69$ & $0 (265)$ \\
    \roundRobin & $56$ & $5431.93$ & $0 (263)$ \\
    \bottomrule
    \end{tabular}
    }
\vspace{-2mm}
\end{table*}

\label{subsec:synthetic-exp}

\subsubsection{Dataset and Setup.} We begin by establishing a graph $\xset$ composed of $500$ nodes. Using the Erdos-Renyi model, we add edges with an edge connection probability denoted as $p$, while ensuring $\xset$ remains connected. Subsequently, we generate two distinct datasets: one with $p=0.5$, referred to as \sparse, and another with $p=0.9$, known as \dense. This deliberate variation enhances our analysis by encompassing different network structures. The edge weights within $\xset$ are uniformly selected at random from a set of values $\{10, \ldots, 10000\}$ 
\footnote{Alternate weight selection methods, such as uniform or exponential, yield qualitatively similar results.}.

For each dataset, we generate a total of $250$ requests. Recall that each request $r$ in the \kfood\ problem is a $4$-tuple $(s, d, t^b, t^e)$. For each request, we sample $s$, $d$ from $\xset$ and $t^b$, $t^e$ from the interval $[100,900]$, such that the preparation-time ($t^e$-$t^b$) lies in the interval $[1,100]$. The delivery time of a request is set equal to the distance between the $s$ and $d$ (essentially, the speed of each server is assumed to be $1$ unit per timestep). Additionally, we maintain that $t^e$ for each request $r$ is distinct. Consequently, we have a set of $250$ requests that arrive at one of the $500$ nodes of $\xset$ over a span of $1000$ timesteps. 
The choice of the edge weights and the data configuration described above have been inspired by the characteristics of the real-world food-delivery dataset. We assume that all the servers are active for the entire duration of $1000$ timesteps. We use $\alpha=1.2$ for the \flpsf\ algorithm.\\



\if 0
\subsubsection{Results.}
\begin{table*}[t]
    \parbox{0.45\linewidth}{
        \centering
        \caption{Comparison of \flowlp\ against various online algorithms on the \sparse\ dataset.}
        \label{tab:syn-sparse}
            \begin{tabular}{cccc}
            \toprule
             & \textbf{\#Unserved} & \textbf{Cost} & \textbf{Min.R}\\
            \midrule
            \flowlp & $2$ & $2444.50$ & $2444.50$ \\
            \flowlp\textsc{(2S)} & $2$ & $1399.56$ & $1399.56$ \\
            \midrule
            \random & $8$ & $1712.04$ & $723$ \\
            \greedyMin & $10$ & $1703.43$ & $1065$ \\
            \docfood & $5$ & $1766.01$ & $1186$ \\
            \minDelta & $30$ & $1500.85$ & $10$ \\
            \roundRobin & $5$ & $1658.12$ & $430$ \\
            \bottomrule
            \end{tabular}
            }
        \hfill 
        \parbox{0.45\linewidth}{
            \centering
            \caption{Comparison of \flowlp\ against various online algorithms on the \dense\ dataset.}
            \label{tab:syn-dense}
            \begin{tabular}{cccc}
            \toprule
             & \textbf{\#Unserved} & \textbf{Cost} & \textbf{Min.R}\\
            \midrule
            \flowlp & $1$ & $7137.44$ & $7137.44$ \\
            \flowlp\textsc{(2S)} & $1$ & $1282.93$ & $1282.93$ \\
            \midrule
            \random & $8$ & $1534.80$ & $800$ \\
            \greedyMin & $8$ & $1550.81$ & $1034$ \\
            \docfood & $7$ & $1554.02$ & $1045$ \\
            \minDelta & $12$ & $1494.92$ & $10$ \\
            \roundRobin & $6$ & $1496.73$ & $855$ \\
            \bottomrule
            \end{tabular}
        }
\end{table*}
\fi

\if 0
\begin{figure*}[t]
  \centering
  \subfigure[\flowlp\ vs online algorithms on the \sparse\ dataset.]{\includegraphics[scale=0.20]{figures/syn_sparse.pdf} \label{fig:syn-sparse}}
  \hfill
  \subfigure[\flowlp\ vs online algorithms on the \dense\ dataset.]{\includegraphics[scale=0.20]{figures/syn_dense.pdf} \label{fig:syn-dense}}
  \caption{Lorenz curves for various algorithms on the synthetic datasets.}
  \label{fig:syn-curves}
  \hfill
  \subfigure[Lorenz curve depicting performance of \flowlp\ vs online algorithms on the food-delivery dataset.]{\includegraphics[scale=0.20]{figures/food1.pdf} \label{fig:flowlp}}
\end{figure*}
\fi

\if 0
\begin{figure*}[t]
  \centering
  \subfigure[Lorenz curve showing comparison of \flowlp\ with online algorithms on the \sparse\ dataset.]{\includegraphics[scale=0.15]{figures/syn_sparse.pdf}}
  \hfill
  \subfigure[Lorenz curve showing comparison of \flowlp\ with online algorithms on the \dense\ dataset.]{\includegraphics[scale=0.15]{figures/syn_dense.pdf}}
  \hfill
  \subfigure[Lorenz curve showing comparison of \flowlp\ with online algorithms on the \sparse\ dataset.]{\includegraphics[scale=0.15]{figures/syn_sparse.pdf}}
  \hfill
  \subfigure[Lorenz curve showing comparison of \flowlp\ with online algorithms on the \dense\ dataset.]{\includegraphics[scale=0.15]{figures/syn_dense.pdf}}
\end{figure*}
\fi

\begin{figure}
    \centering
    \includegraphics[width=0.35\textwidth]{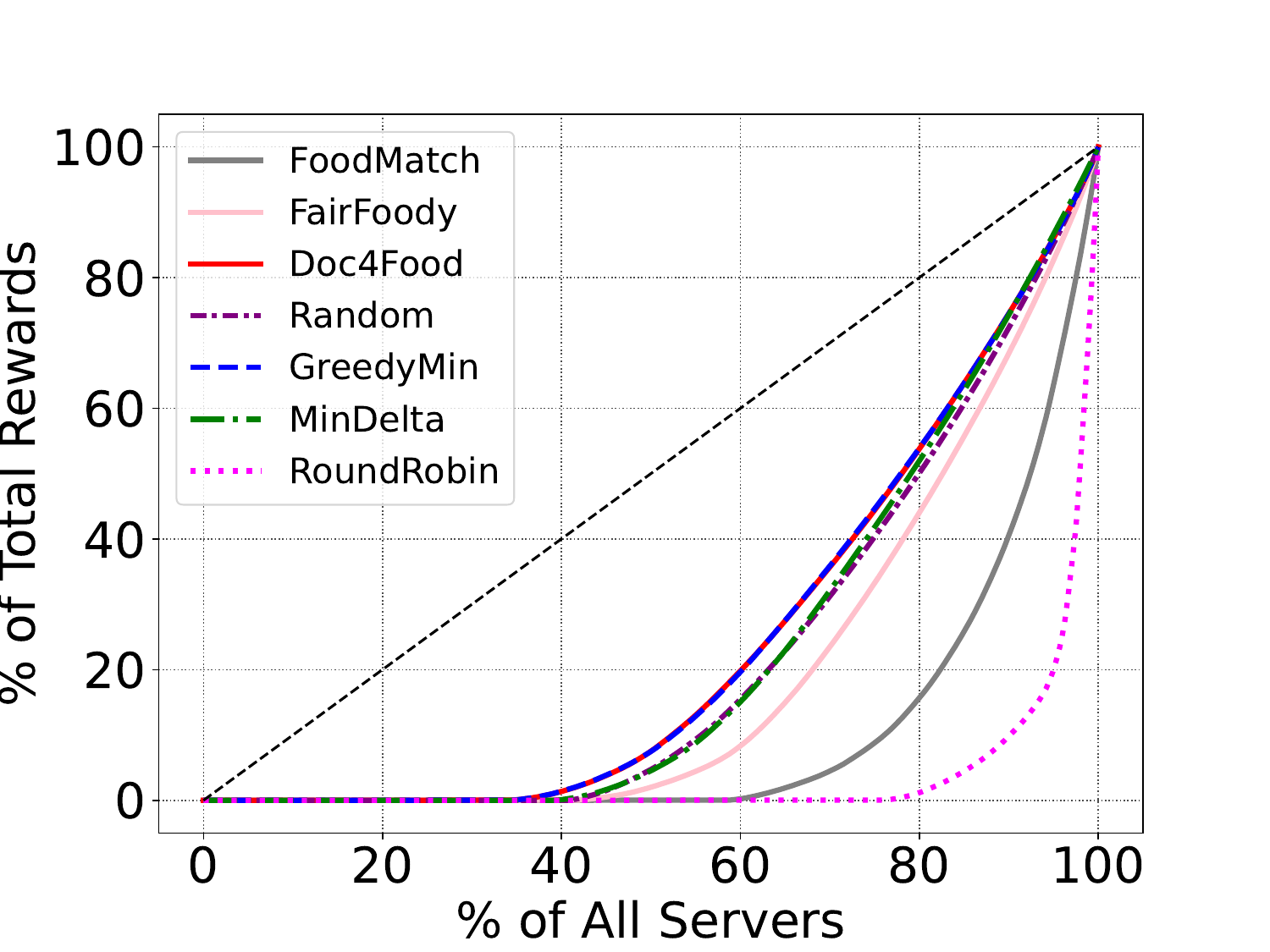}
    \vspace{-2mm}
    \caption{
    Offline delivery algorithms \foodmatch\ and \fairfoody\ vs. online algorithms on food delivery dataset.}
    \label{fig:fmff}
    \vspace{-4mm}
\end{figure}

\noindent
\textbf{Results.} Tables (\ref{tab:syn-sparse}) and (\ref{tab:syn-dense}) show results on the \sparse\ and \dense\ datasets respectively. We observe qualitatively similar results for both datasets. The fractional offline algorithms \flowlp\ and \flpsf\ perform optimally, achieving equal rewards for all servers. As intended, \flpsf\ reduces the cost while maintaining similar reward distribution compared \flowlp. Among online algorithms, \docfood\ achieves the highest min. reward while serving a nearly maximal number of requests while \minDelta, on the other hand, performs poorly in terms of both feasibility and minimum reward maximization. Notably, \docfood, as intended, reduces infeasibility as compared to \greedyMin\ due to its non-lazy server movements. \random\ and \roundRobin\ do well in terms of feasibility but fall short in increasing the minimum reward. Note that a higher (or lower) cost incurred by an algorithm can be primarily due to the more (or lesser) number of requests it serves. 

Figures (\ref{fig:syn-sparse}) and (\ref{fig:syn-dense}) depict the Lorenz curves corresponding to various algorithms, focusing on servers within the bottom $25$ percentile in terms of rewards. The closer a curve is to the line of equality, the more the fraction of net rewards captured by the corresponding fraction of the servers. The curves for \flowlp\ and \flpsf, obviously, intersect with the line of equality. Remarkably, we see that \docfood, closely followed by \greedyMin, outperforms all other online algorithms, raising the sum of rewards earned by the bottom $25\%$ earners to nearly $20\%$ of the net server rewards.

\subsection{Experiments with Real Food Delivery Data} 
\label{subsec:food-delivery}
\subsubsection{Dataset.}
We obtained a real-world Indian food delivery dataset by contacting the authors of~\cite{fairfoody}. It comprises $6$ days of food-delivery data from $3$ major Indian cities. 
The dataset spans Wednesday to Monday, effectively encompassing order load trends for both weekdays and weekends. Notably, the dataset provides thorough information about the delivery drivers (or servers) and orders (or requests) such as delivery vehicle trajectories, road networks of the cities, drivers' chosen working shifts (or active time durations), vehicle IDs, restaurant locations, customer locations, arrival-, pickup-, and delivery-time of each order, among other information. 

\subsubsection{Additional baselines.} In addition to the baseline algorithms described earlier, we consider two offline algorithms from the domain of online food delivery:
\begin{itemize}
    \item \foodmatch: An efficient, heuristic-based last-mile delivery algorithm introduced by~\cite{foodmatch} broadly based on the ideas of order-batching and order-driver bipartite matching~\cite{foodmatch}. It can be viewed as the semi-online counterpart to \greedyMin. 
    \item \fairfoody: A fair food delivery algorithm that tries to achieve an equitable driver income distribution via minimizing the income gap between the minimum and maximum earning drivers~\cite{fairfoody}.
    \item \workfood: A fair food delivery algorithm that attempts to provide minimum wage guarantees to the gig delivery drivers by balancing the demand and supply of the drivers in the platform.
\end{itemize}

\subsubsection{Setup.} We present an experimental evaluation on a subset of the dataset comprising the first $8$ hours of data of one of the three cities, averaged across all days. It consists of $1034$ orders and $650$ drivers. We selected this particular subset because it corresponds to the largest instance of the \flowlp\ problem solvable under the computational constraints of our evaluation framework. \footnote{Scaling to the entire 24-hour duration is feasible through practical techniques like re-optimization~\cite{reopt}. 
} The length of each timestep is $1$ second. Most orders in the dataset already have distinct arrival times, otherwise we ensure the same by shifting the orders in time up to a few minutes. 

The dataset includes information about specific pre-defined intervals in the day when drivers choose to be active, referred to as ``work-shifts''. For comparison against \flowlp, which assumes the servers to be active for all timesteps, we disregard the work-shifts and consider all the drivers to be active for the entire duration of $8$ hours. However, while comparing against more practical algorithms like \foodmatch\ and \fairfoody, which do consider the work-shifts, we also account for the same. Due to this distinction, we present the comparison between these algorithms and the online algorithms separately. 

For the \random\ baseline, presented evaluations are an average of $5$ runs of the algorithms. For the \flpsf\ algorithm, an $\alpha$ value of $5$ is chosen.


\subsubsection{Results.} Table (\ref{tab:flowlp}) shows the evaluations of the fractional offline solutions \flowlp\ and \flpsf\ and the online algorithms. The offline methods achieve optimal solutions where all servers attain equal rewards. \flpsf\ outperforms \flowlp\ in terms of cost efficiency. Among the online algorithms, \docfood\ performs the best in increasing the min. reward, slightly outperforming \greedyMin\, while being more feasible. The \random\ algorithm shows mediocre performance across all metrics. While \minDelta\ excels in minimizing costs, it fares poorly in terms of minimum reward, resulting in nearly $30\%$ servers receiving no rewards.

Table (\ref{tab:fmff}) presents the evaluations for the offline algorithms \foodmatch\ and \fairfoody\ along with the online baselines. The offline algorithms aren't designed to reject requests beyond preparation time so their corresponding infeasibility values denote pick-up deadline violations; no request was actually infeasible.  Again, \docfood\ leads to best min. reward slightly outperforming \greedyMin\, that too with fewer infeasbilities. It's worth noting that the online approach \minDelta\ demonstrates a close performance to \fairfoody, its offline counterpart. Other online algorithms exhibit similar trends as observed in table (\ref{tab:flowlp}).

Figures (\ref{fig:flowlp}) and (\ref{fig:fmff}) clearly show the superiority of \docfood\ in effectively elevating the net rewards of the bottom earners as compared to all other online algorithms.





\if 0
\begin{table}[t]
\centering
\caption{Comparison of \flowlp against various online algorithms on the food-delivery dataset.}
\label{tab:flowlp-comparison}
    \begin{tabular}{|c|c|c|c|}
    \hline
     & \textbf{\#Unserved} & \textbf{Avg.Reward} & \textbf{Min.Reward}\\
    \hline
    \flowlp & & & \\
    \hline
    \greedyMin & & & \\
    \random & & & \\
    \minDelta & & & \\
    \roundRobin & & & \\
    \hline
    \end{tabular}
\end{table}

\begin{table}[t]
\centering
\caption{Comparison of domain-specific offline algorithms against various online algorithms on the food-delivery dataset.}
\label{tab:fmff-comparison}
    \begin{tabular}{|c|c|c|c|}
    \hline
     & \textbf{\#Unserved} & \textbf{Avg.Reward} & \textbf{Min.Reward}\\
    \hline
    \foodmatch & & & \\
    \fairfoody & & & \\
    \hline
    \greedyMin & & & \\
    \random & & & \\
    \minDelta & & & \\
    \roundRobin & & & \\
    \hline
    \end{tabular}
\end{table}
\fi
 
\if 0
\begin{table*}[t]
    \parbox{0.45\linewidth}{
      \centering
      \caption{Comparison of \flowlp\ against various online algorithms on the food-delivery dataset. \label{tab:flowlp}}
      \begin{tabular}{cccc}
        \toprule
         & \textbf{\#Unserved} & \textbf{Cost} & \textbf{Min.R.}\\
        \midrule 
        \flowlp & $7$ & $19049.20$ & $19049.20$ \\
        \flowlp\textsc{(2S)} & $8$ & $8505.19$ & $8505.19$ \\
        \midrule
        \random & $8$ & $7645.31$ & $786$ \\
        \greedyMin & $11$ & $7508.34$ & $2319$ \\
        \docfood & $7$ & $7574.29$ & $2381$ \\
        \minDelta & $0$ & $6714.84$ & $0 (32)$ \\
        \roundRobin & $25$ & $7405.97$ & $0 (1)$ \\
        \bottomrule
        \end{tabular}  
    }
    \hfill
    \parbox{0.45\linewidth}{
    \centering
    \caption{Offline delivery algorithms vs Online algorithms on the food-delivery dataset. \label{tab:fmff}}
    \begin{tabular}{cccc}
    \toprule
     & \textbf{\#Unserved} & \textbf{Cost} & \textbf{Min.R}\\
    \midrule
    \foodmatch & $2$ & $8357.47$ & $0 (379)$\\
    \fairfoody & $170$ & $9203.52$ & $0 (268)$ \\
    \midrule
    \random & $0$ & $15705.28$ & $0 (269)$ \\
    \greedyMin & $35$ & $8481.59$ & $0 (223)$ \\
    \docfood & $30$ & $8672.00$ & $0 (223)$ \\
    \minDelta & $0$ & $11589.69$ & $0 (265)$ \\
    \roundRobin & $10$ & $5431.93$ & $0 (263)$ \\
    \bottomrule
    \end{tabular}
    }
\end{table*}
\fi

\if 0
\begin{figure}
  \centering
  \begin{subfigure}
    \centering
    \includegraphics[height=0.35\textwidth, width=0.42\textwidth]{figures/food0.pdf}
    \caption{Lorenz curve depicting performance of \flowlp\ vs online algorithms on the food-delivery dataset.}
    \label{fig:food0}
\end{subfigure}
  \hfill
  \begin{subfigure}
    \centering
    \includegraphics[height=0.35\textwidth, width=0.42\textwidth]{figures/food0.pdf}
    \caption{Lorenz curve showing offline delivery algorithms \foodmatch\ and \fairfoody\ vs online algorithms on the food-delivery dataset.}
    \label{fig:food1}
  \end{subfigure}
  \label{fig:lorenz-curves}
\end{figure}
\fi

\if 0
\begin{figure*}[t]
  \centering
  \subfigure[\flowlp\ vs online algorithms]{\includegraphics[scale=0.33]{figures/food1.pdf} \label{fig:flowlp}}
  \hfill
  \subfigure[\foodmatch\ and \fairfoody\ vs online algorithms]{\includegraphics[scale=0.33]{figures/food0.pdf}\label{fig:fmff}}
  \caption{Lorenz curves for various algorithms on the food-delivery dataset.}
  \label{fig:real-curves}
\end{figure*}
\fi

\if 0
\begin{figure}
    \centering
    \includegraphics[width=0.50\textwidth]{figures/flow-net.pdf}
    \label{fig:flow-net}
\end{figure}
\fi

\section{Conclusion}
In this work, we introduce two generalizations of the classical \kserver\ problem -- \kfood\ and \fairkfood -- with the ability to model a variety of real-world systems, where the \fairkfood\ problem assumes a \textit{maxmin} objective inspired by Rawls's theory of justice. We establish that these problems are strongly NP-hard and develop a versatile fractional offline optimal solution \flowlp\ for the \fairkfood\ problem. Moreover, we explore a strategy to achieve a better cost-fairness trade-off in \flowlp, leading to the \flpsf\ algorithm. We propose \docfood, a heuristic-based online algorithm for the food delivery domain. We conduct extensive experimentation on a synthetic dataset as well as a real-world food delivery dataset, using various offline and online algorithms. We hope our work can serve as a foundation for various interesting research problems related to enhancing online algorithms using machine-learned predictions, fairness in online algorithms, and deep learning for constrained optimization, among others. 

\vspace{1mm} \noindent
\textbf{Reproducibility.} Our codebase is available at {\url{https://github.com/ddsb01/Fair-kFood}.



\bibliography{main}
\end{document}